\title{\LARGE \bf
	Obstacle Avoidance of Resilient UAV Swarm Formation with Active Sensing System in the Dense Environment 
}
\author{Peng Peng, Wei Dong*, Gang Chen and Xiangyang Zhu
	\thanks{All authors are with the State Key Laboratory of Mechanical System and Vibration, School of Mechanical Engineering, Shanghai Jiaotong University, Shanghai, 200240, China (email: \{yc\_pengpeng, dr.dongwei, chg947089399, mexyzhu\}@sjtu.edu.cn).}%
}
\begin{document}

	\maketitle
	\thispagestyle{empty}
	\pagestyle{empty}

	\begin{abstract}
		
		This paper proposes a perception-shared and swarm trajectory global optimal (STGO) algorithm fused UAVs formation motion planning framework aided by an active sensing system. First, the point cloud received by each UAV is fit by the gaussian mixture model (GMM) and transmitted in the swarm. Resampling from the received GMM contributes to a global map, which is used as the foundation for consensus. Second, to improve flight safety, an active sensing system is designed to plan the observation angle of each UAV considering the unknown field, overlap of the field of view (FOV), velocity direction and smoothness of yaw rotation, and this planning problem is solved by the distributed particle swarm optimization (DPSO) algorithm. Last, for the formation motion planning, to ensure obstacle avoidance, the formation structure is allowed for affine transformation and is treated as the soft constraint on the control points of the B-spline. Besides, the STGO is introduced to avoid local minima. The combination of GMM communication and STGO guarantees a safe and strict consensus between UAVs. Tests on different formations in the simulation show that our algorithm can contribute to a strict consensus and has a success rate of at least 80\% for obstacle avoidance in a dense environment.
		Besides, the active sensing system can increase the success rate of obstacle avoidance from 50\% to 100\% in some scenarios.

	\end{abstract}
	
	\section{INTRODUCTION}

	Compared to one single unmanned aerial vehicle (UAV), a swarm of UAVs can effectively increase payload capacity and have a broader sense of the environment. With such merits, swarms could be more efficiently fulfill tasks such as detection, search, mapping, and cooperative transportation \cite{chung2018survey}. 
	In the aforementioned tasks, the swarm is commonly required to maintain a specified formation to traverse a dense field of obstacles safely.
	This requirement brings two challenges.
	First, for a distributed swarm, one main prerequisite is that all UAVs need to reach a safe consensus on trajectory planning and decision-making \cite{nestmeyer2017decentralized} while usually, different UAVs have a different understanding of the environment. 
	Current work \cite{alonso2019distributed} uses the convex hull to express safe space for transmission, which builds a foundation for perception consensus. Nevertheless, perception consensus can not strictly ensure planning consensus due to the local minima existing in the optimization problem.
	Second, for the sake of the dense environment and the overall large volume of UAV formation, it is significant to have a complete perception of the environment to ensure safety \cite{zhou2021raptor}. 
	\begin{figure}[thpb]
		\centering
		\includegraphics[width=3.4in]{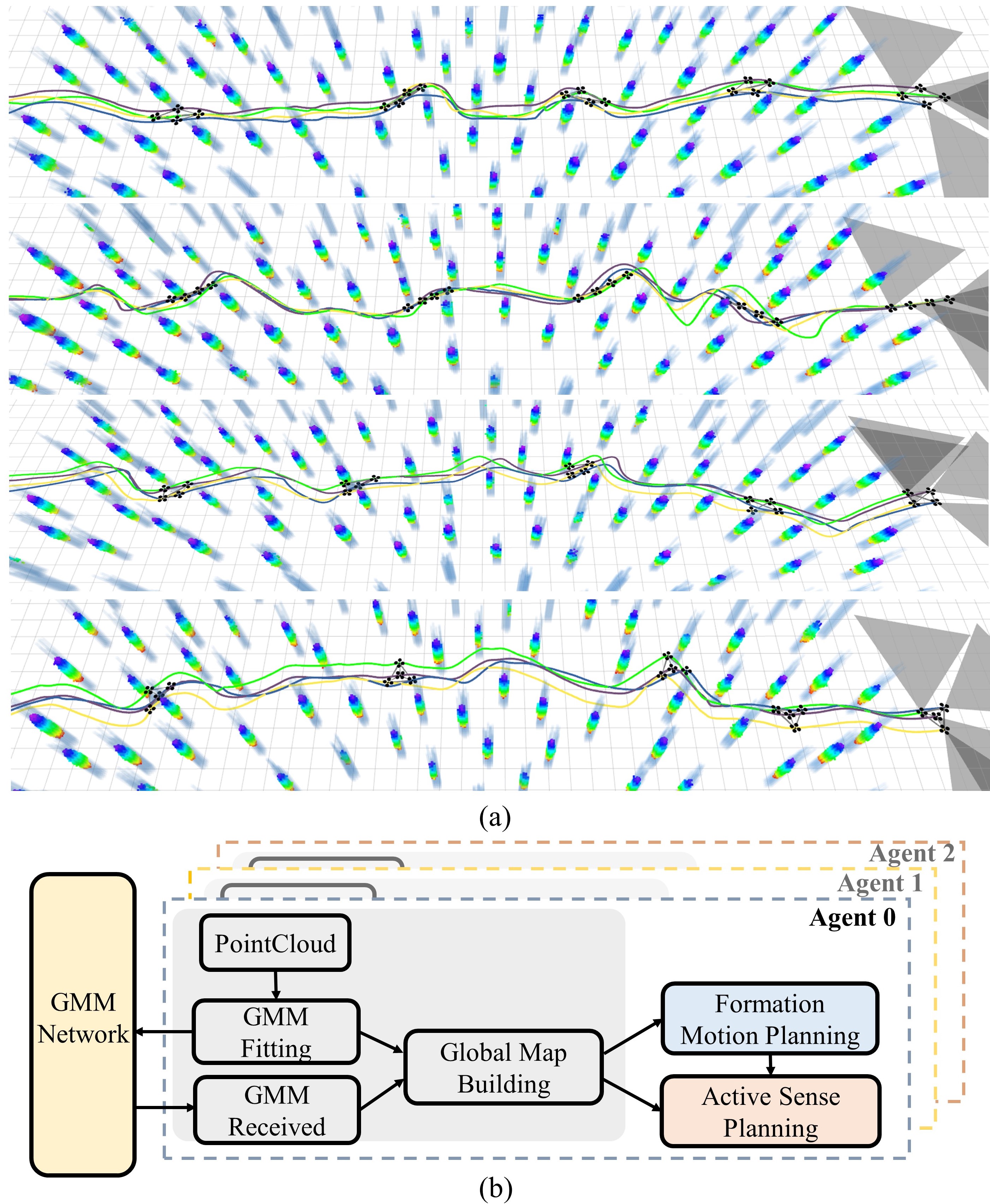}
		\caption{(a) Diamond, straight line, rectangle and triangle formation traverse through the forest scenario from left to right. (b) An overview of our perception and planning system, including formation motion planning and active sense planning.}
		\label{kuang_jia}
	\end{figure}
	Therefore, the FOV of each UAV in the swarm needs to be elaborately planned. Although there has been a lot of work \cite{tordesillas2021panther,chen2021active} considering the field of view of one single UAV, planning FOVs for a UAV swarm in motion planning is still an open problem.

	To bridge the gap of research, we propose a distributed and synchronous motion planning framework for multiple UAVs formation equipped with an active sensing system in the dense obstacle environment, as shown in Fig. \ref{kuang_jia}b. The GMM is used to represent the distribution of obstacles in the environment compactly and transmitted in the swarm. This transmission contributes to a global map for each agent and lays a foundation for the consensus of subsequent planning. The STGO algorithm in the formation motion planning module, together with the GMM communication, guarantees strict consensus for all UAVs. 
	
	In addition, the FOVs of all UAVs are optimized considering the exploration and safety, as well as reducing the overlap of FOVs. This optimization is solved by the DPSO algorithm. 
	
	Besides, in order to improve flight safety, our work allows the affine transformation of the formation structure and treats it as the soft constraint on the control points of the B-spline. The formation trajectory planning is based on the gradient-based method \cite{zhou2019robust}. We adopt the synchronous planning manner and treat the trajectories and observation directions of all UAVs as optimization variables to ensure a feasible solution for all UAVs.
	Simulation is conducted to verify our algorithm for formation maintenance and obstacle avoidance.
	
	We list the main contributions of our work as follows: 
	
	1) A perception-shared and STGO algorithm fused UAVs formation motion planning framework, ensuring a safe and strict consensus on trajectory planning.
	
	2) A distributed active sensing system for the UAV swarm, and to the best of our knowledge, this is the first system considering FOVs in the UAV swarm motion planning task.

	\section{Related Works}
	
	\subsection{Distributed Formation Obstacle Avoidance }
	
	Numbers of methods for UAV formation obstacle avoidance have been proposed. Among them, the lead-follower pattern is widely used due to its convenience of implementation \cite{panagou2014cooperative}. This method highly depends on the leader agent, which makes the formation less robust. Paul et al. in \cite{paul2008modelling} construct an artificial potential field to maintain the formation shape and avoid the obstacle. However, the potential field usually has many local minima, resulting in the robot being trapped.
	
	Recently, motion planning for distributed formation obstacle avoidance has been extensively researched. In \cite{quan2021distributed}, Quan et al. use a graph-based metric to quantify the similarity between two formations. For each agent, the trajectory is optimized based on other agents' pre-planned trajectories. In some scenarios, the pre-planned trajectories may lead to the following UAVs failing in path finding. In other words, the optimized trajectories may not be optimal for the whole swarm.
	Alonso et al. \cite{alonso2019distributed} present a method for formation flight planning based on distributed consensus. The core of this method is to get the same safe convex hull for different robots by intersection, which is taken as the feasible area for planning. This same safe convex hull builds the prerequisite for consensus. Nevertheless, at the same time, this may also make obstacles and trajectories close because it treats the safe space equally \cite{quan2020survey}. Besides, the same convex hull can not strictly guarantee optimization consensus.

	\subsection{Active Sensing System }
	
	A large sensing range could improve flight safety and localization accuracy. Compared to the omnidirectional camera sensor \cite{xu2021omni,schilling2021vision}, active sensing system active vision has advantages in weight and computation consumption. In \cite{zhou2021raptor}, Zhou et al. propose a yaw planning algorithm and define the quality of observation to improve flight safety. In \cite{tordesillas2021panther}, to help the object tracking, the heading of the UAV is planned to keep the dynamic obstacles in the FOV.
	Chen et al. \cite{chen2021active} use a multiple objectives function considering exploration, velocity, and dynamic obstacles to plan for an independent rotation camera with 1 degree of freedom.
	Besides, the active sensing system has also been applied to the UAV swarm. Researchers from \cite{tallamraju2019active} design the optimal trajectory and angular configuration for a UAV swarm to track a target of interest. Zhang et al. \cite{zhang2021agile} use a graph-based active sensing method to improve relative localization for UAVs in the swarm.


	\section{GMM-Based Mapping and Transmission }
	In this work, GMMs are used to compactly represent point cloud and transmitted in the swarm. By resampling from the models, we can reconstruct the obstacles, and build the occupancy grid map and the euclidean signed distance functions (ESDF) map. Compared to standard gaussian distribution, GMM can fit more complex point distributions \cite{corah2019communication} due to the linear combination of multiple gaussian models, which are called components. The probability density function of a GMM with $K$ components is formed as
	\begin{equation}
		p(\boldsymbol{x})=\sum_{k=1}^{K} \pi_{k} \mathcal{N}\left(\boldsymbol{x} \mid \boldsymbol{\mu}_{k}, \boldsymbol{\Sigma}_{k}\right)
	\end{equation}
	where $\mathcal{N}\left(\boldsymbol{x} \mid \boldsymbol{\mu}_{k}, \boldsymbol{\Sigma}_{k}\right)$ is the standard gaussian distribution with mean $\boldsymbol{\mu}_{k}$ and covariance matrix $\boldsymbol{\Sigma}_{k}$. $\pi_{k} $ represents the proportion of each component in the GMMs with $\sum_{k=1}^{K} \pi_{k}=1$. Our work uses the Expectation-Maximum (EM) algorithm \cite{bishop2006pattern} to calculate the optimal $\pi_{k}$, $\boldsymbol{\mu}_{k}$ and $\boldsymbol{\Sigma}_{k}$ of the GMM. EM algorithm solves the maximum-likelihood estimatation in a iterative manner. For the E-step, the probablity of $x_n$ belonging to component $z_k$ is calculated and the probablity is called as the latent variable.
	\begin{equation}
		\gamma\left(z_{n k}\right)=\frac{\pi_{k} \mathcal{N}\left(\boldsymbol{x}_{n} \mid \boldsymbol{\mu}_{k}, \boldsymbol{\Sigma}_{k}\right)}{\sum_{j=1}^{K} \pi_{j} \mathcal{N}\left(\boldsymbol{x}_{n} \mid \boldsymbol{\mu}_{j}, \boldsymbol{\Sigma}_{j}\right)}
		\label{gmm_zk}
	\end{equation}
	Then for the M-step, the optimal parameters are optimized through maximum-likelihood based on the latent variable
	\begin{equation}
		\begin{aligned}
			\boldsymbol{\mu}_{k}^{\text {new }} &=\frac{1}{N_{k}} \sum_{n=1}^{N} \gamma\left(z_{n k}\right) \boldsymbol{x}_{n} \\
			\boldsymbol{\Sigma}_{k}^{\text {new }} &=\frac{1}{N_{k}} \sum_{n=1}^{N} \gamma\left(z_{n k}\right)\left(\boldsymbol{x}_{n}-\boldsymbol{\mu}_{k}^{\text {new }}\right)\left(\boldsymbol{x}_{n}-\boldsymbol{\mu}_{k}^{\text {new }}\right)^{\mathrm{T}} \\
			\pi_{k}^{\text {new }} &=\frac{N_{k}}{N}
		\end{aligned}
		\label{gmm_em}
	\end{equation}
	where $N_{k}=\sum_{n=1}^{N} \gamma\left(z_{n k}\right)$. By iteratively calculating Eq. \ref{gmm_zk} and Eq. \ref{gmm_em}, all parameters can converge to the optimal value. 
	
	One key point of the GMM is that the number of components ${{K}}$ needs to be specified in advance. When faced with an unknown environment, it is difficult for a robot to select a suitable ${{K}}$.
	Commonly, with more components, the GMM can represent the environment more accurately, while this also costs the EM algorithm much more time. Many techniques have been raised to choose an appropriate ${{K}}$ according to the complexity of the environment \cite{corah2019communication}. In our work, ${{K}}$ is proportional to the number of points in the point cloud. This criterion can adapt to most environments and does not require additional computations.
	
	Reconstructing the distribution of obstacles in space is implemented by resampling from the GMM. Meanwhile, an occupancy grid map is built through raycasting from the robot to the sampled points. The state update of each grid in the map follows the sensor measurement model in \cite{books/daglib/0014221}. Moreover, a Euclidean Signed Distance Field (ESDF) map is maintained by Euclidean distance transform (EDT), and both maps are stored in the circular buffer data structure \cite{usenko2017real}. Illustrations for the GMM-based maps are shown in Fig. \ref{gmm}.
	
	Thanks to the highly compact expression of point cloud with GMM, the message for communication only contains the parameters of GMM and the location of the agent. This expression greatly reduces the demand for communication bandwidth from 200 kbps to 5 kbps in simulation.
	\begin{figure}[thpb]
		\centering
		\includegraphics[width=3.4in]{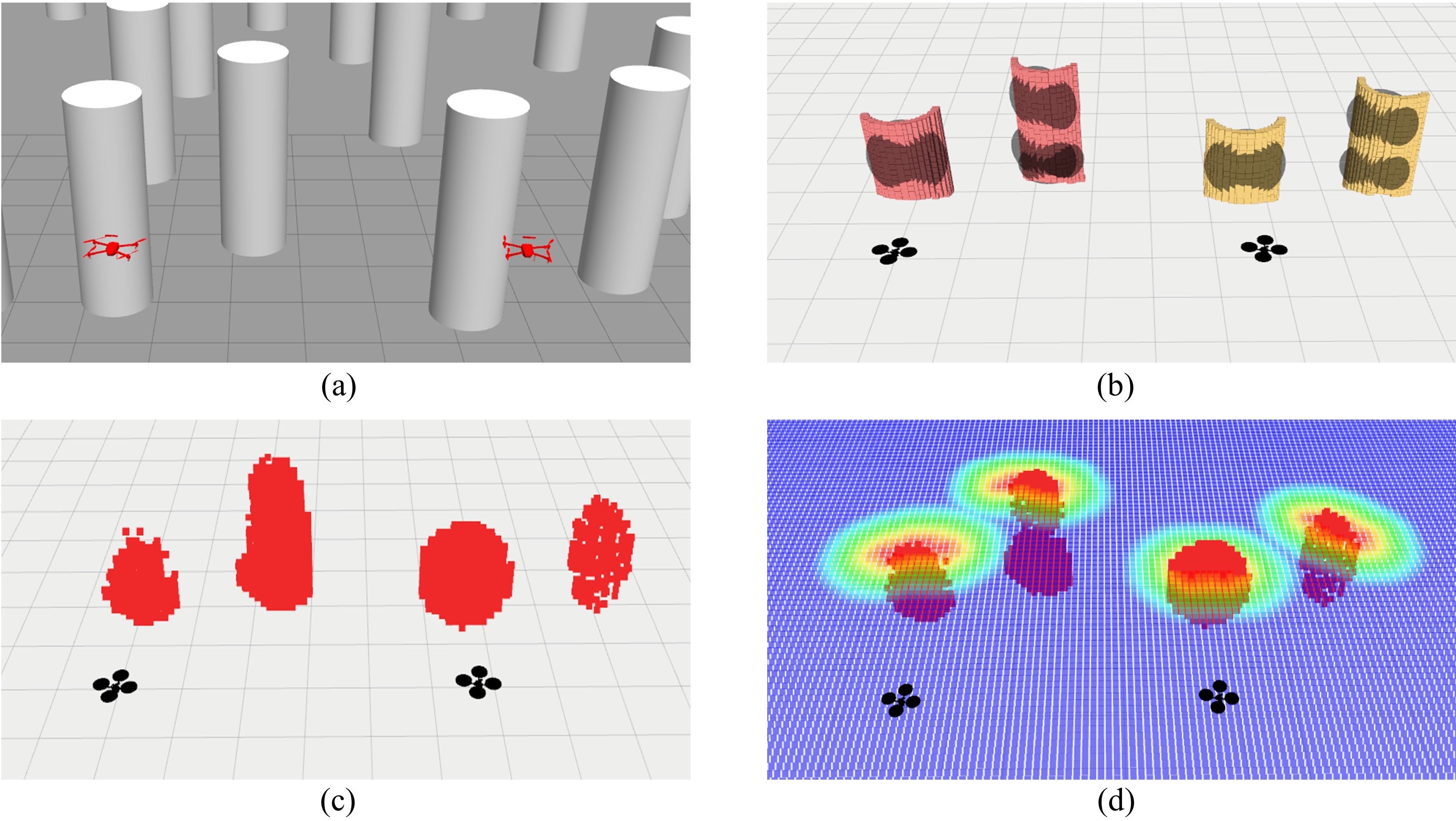}
		\caption{The pipeline for two UAVs building a global map. (a) Two UAVs are equipped with depth cameras in the Gazebo simulation environment. The camera's FOV is limited. (b) The red grids are the point cloud seen by the UAV on the left, and the yellow grids belong to the UAV on the right. The black ellipses are plotted according to the GMM fitting result. Each ellipse represents a component. (c) The global map is constructed by the left UAV after GMM communication and resampling. Through communication, it can reconstruct the obstacles on the right side. (d) A cross-section of the ESDF map built from the global map maintained by the left UAV. 
		}
		\label{gmm}
	\end{figure}
	\section{Active-Sensing System Planning}
	
	When conducting a traversal mission, the UAV swarm is required to have a full understanding of the environment and hence, we adopt the active sensing system for UAVs by planning the yaw angle. For a swarm of $N$ UAVs, denote the angles of all UAV's yaw attitude in the swarm as ${\boldsymbol{\Theta}}=[\theta_0, \theta_1...\theta_{N-1}]^T$, where $\theta_i$ represents the yaw angle of UAV $i$. The positions of UAVs for yaw angles optimization are selected from the pre-planned trajectory described in Sec.V with equal time interval, denoted as $\boldsymbol P(t_{0}), \boldsymbol P(t_{1})... \boldsymbol P(t_{m-1})$. Yaw angles between this selected time point are calculated by linear interpolation.	
	Considering the balance between safety and exploration, the cost function for yaw angles optimization at $t_k$ is defined as
	\begin{equation}
		\begin{gathered}
			\min_{\bm{\Theta_k}} J =\lambda_1 f_{ol}  +\lambda_2 f_{v}-\lambda_3 f_{pv}+
			\lambda_4 f_{s} \\
			\theta_i \in [-\pi,\pi), \quad  i \in [0,N-1]
		\end{gathered}
		\label{head_cost_function}
	\end{equation}
	The $f_{ol}$ function in the first term is defined as $f_{ol}=\sum_{i\neq j}Overlap(\theta_i,\theta_j)$, where $Overlap(\theta_i,\theta_j)$ function
	calculates the overlap area of two FOVs between UAV $i$ and UAV $j$ as shown in Fig. \ref{fov}. In practice, to reduce computational complexity, we project the UAV's FOV onto the same height plane and use the Boost Geometry$\footnote{https://github.com/boostorg/geometry}$ library to get the size of the overlapping regions of the two triangles. Besides, to ensure the safety of the swarm, agents need to pay attention to obstacles in its direction of velocity. Thus, we set $ f_{v}=\sum_i(\theta_i-\theta_i^v)^2$, where $\theta_i^v$ is the velocity direction of UAV $ i$ determined by the trajectory. The third term $f_{pv}= \sum_i PerceptionValue(\theta_i)$ in this cost function evaluates the perception value corresponding to a certain yaw angle. To be specific, in order to have a complete understanding of the whole environment and improve the success rate of path finding, agents tend to observe the field of higher uncertainty. Borrowing from the theory of information entropy, the perception value is defined as $PerceptionValue(\theta_i)=-\sum_{grid_l\in FOV} p_l logp_l $ where $p_l $ indicates the occupancy probability of grid $l$ within the FOV. This function highlights those grids with occupancy probability around $0.5$. 
	The last term in the cost function considers the smoothness of the UAV's yaw rotation and is defined as $ f_s=\left\|\bm{\Theta_k}-\bm{\Theta_{k-1}}\right\|_{2}^2$. Morever, $ \lambda_1$, $ \lambda_2$, $ \lambda_3$ and $ \lambda_4$ determine the weight of each term.
	\begin{figure}[thpb]
		\centering
		\includegraphics[width=2.1in]{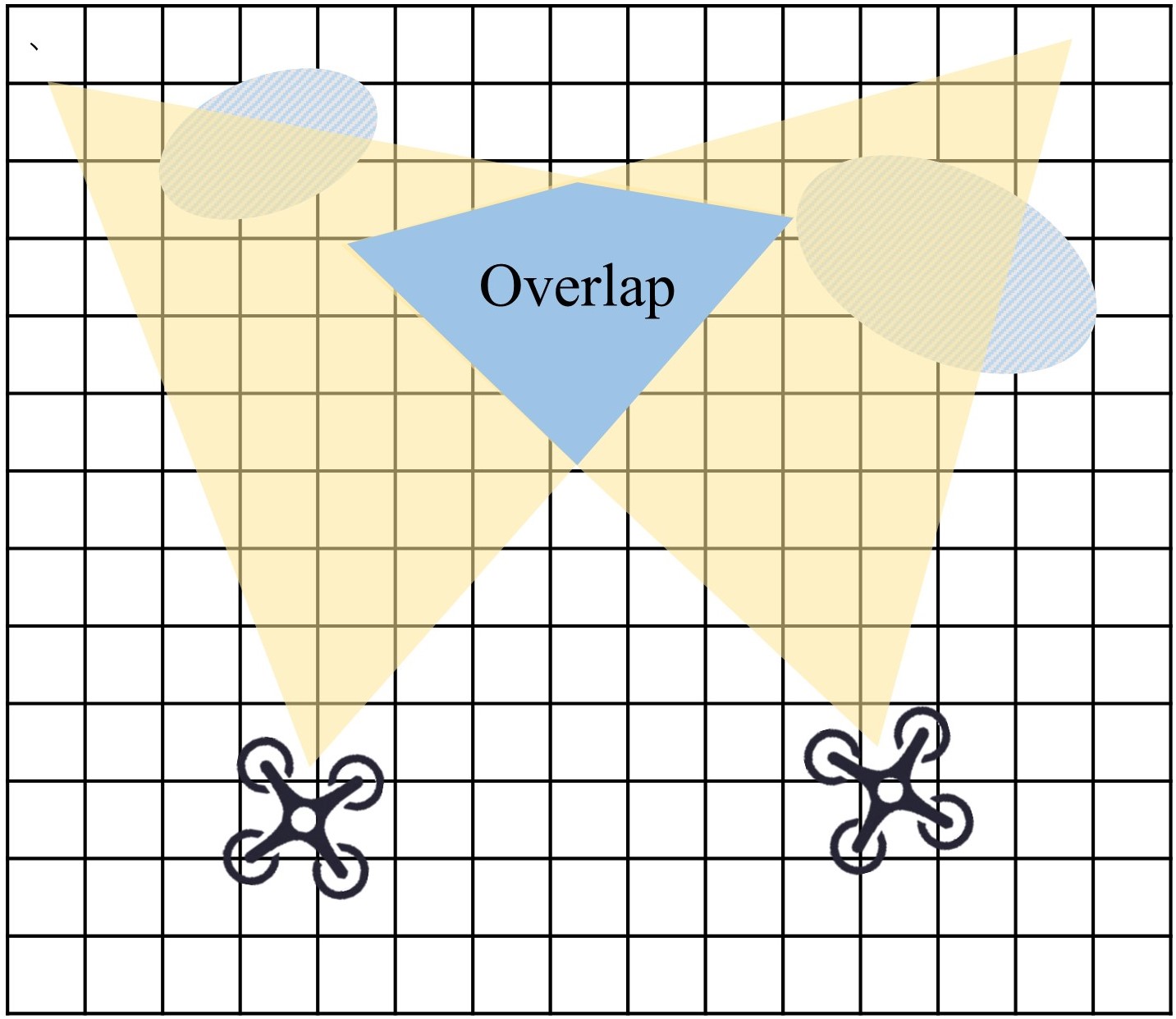}
		\caption{The overlap area of two cameras' FOVs on the same height plane.
		}
		\label{fov}
	\end{figure}
	The optimization problem described above is a typical non-convex problem. In order to take advantage of the computing power of the UAV swarm and avoid local minima, we utilize a DPSO algorithm as shown in Alg. \ref{DPSO}. Compared with standard PSO, DPSO adds the interaction between UAVs and thus increases the number of particles used for computation. Besides, DPSO can guarantee strict consensus between UAVs. Specifically, in the beginning, UAV $i$ initializes a group of particles, and then each particle updates its state concerning the optimal particle in the particle group $P_{gbest}^i$ and its own history optimal value $P_{pbest}^{i,k}$. All UAVs in the swarm broadcast the current optimal particle of their own particle group at a certain frequency. After receiving the information $P_{gbest}^j$, the UAV will compare it with its own group's optimal particle and select the better one. This algorithm achieves optimization consensus in practical and compared with some distributed optimization algorithms that are highly dependent on communication like DGT \cite{xu2015augmented}, the DPSO algorithm can obtain better optimization results in the case of unstable bandwidth.
	\begin{algorithm}
		\label{DPSO}
		\caption{DPSO algorithm}
		\LinesNumbered 
		\KwIn{max iteration times $ M$}
		For agent $i $, initialize a group of particles $S_i$  with random velocities and positions\;    
		\For{episode=1 to  M}{
			\For {particle  $ P_k \in S_i$  } {
				\If{$J(P_k)<J(P_{pbest}^{i,k})$}  {
					$P_{pbest}^{i,k} \leftarrow P_k$\;
					
					\If{$J(P_k)<J(P_{gbest}^{i})$} {
						$P_{gbest}^{i} \leftarrow P_k$\;
					}
				}        
			}
			\For {particle  $ P_k \in S_i$  } {    
				updateVelocityAndPosition($ P_k$)\;
			}

			\If{receive $P_{gbest}^j$ from other agent $j$}  {
				\If{$J(P_{gbest}^j )<J(P_{gbest}^i)$} {
					$P_{gbest}^i \leftarrow P_{gbest}^j$\;
				}           
			}
			broadcast $P_{gbest}^i$ at a certain frequency\;      
		}
		
	\end{algorithm}

	\section{Formation Motion Planning}
	
	The swarm in our work is required to maintain a formation during the traversal task. To avoid obstacle, the formation allows for rotation, scaling and translation. We use a simplified affine transformation to describe formation at time $t$
	\begin{equation}
		\boldsymbol{P}(t)=s \boldsymbol {R}\boldsymbol P_{ref}+\boldsymbol P_{trans}(t)
		\label{affine_transformation}
	\end{equation}            
	where $s$ represents the scale factor, $\boldsymbol R$ is the rotation matrix and $ \boldsymbol P_{trans}(t)$ describes the translation of the formation centre. $\boldsymbol P_{ref}$ is the reference formation structure such as diamond and straight line.
	
	Motion planning for the formation can be divided into two steps. For the first step, the A* algorithm is used to find a path from the formation center to the goal, denoted as $Path_{c}$. Due to the limited sensing range of UAV, when the node in the A* algorithm reaches the map boundary, the algorithm is terminated, and the boundary point is regarded as the endpoint of the path.
	In our work, in order to ensure the connectivity of the swarm during obstacle avoidance, all UAVs are required to bypass the obstacles from the same side. Such requirement is reasonable for some situations, such as vision-based relative localization or cooperative transportation.
	Therefore, the path for each UAV is based on $Path_{c}$ as shown in Fig. \ref{astar}. For each turning point along $Path_{c}$, a new formation is generated around it with a small scale factor for safety, and then each agent's path is found by connecting these new formations. In practice, these paths can be roughly used as the front-end paths for formation motion planning.
	\begin{figure}[thpb]
		\centering
		\includegraphics[width=3.0in]{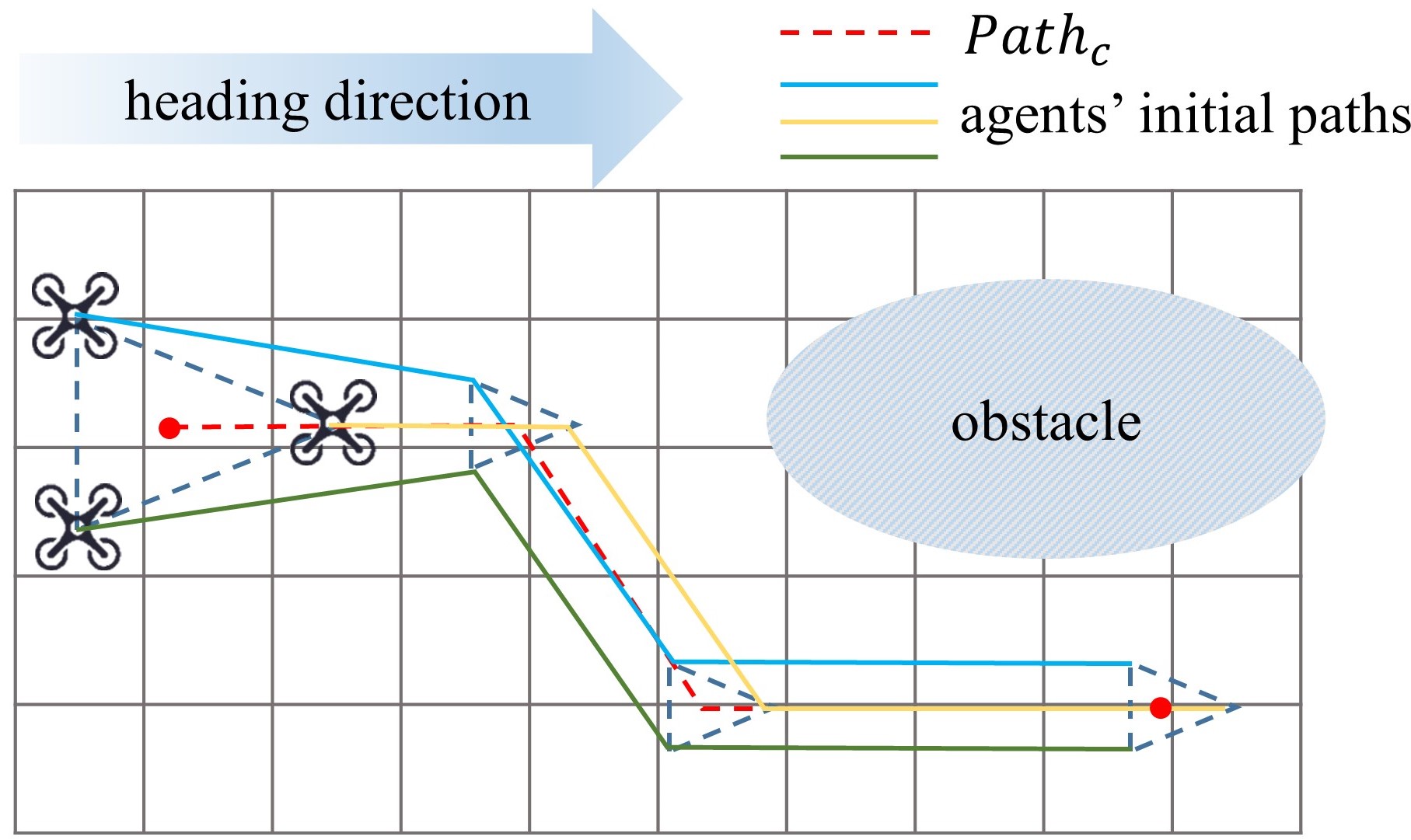}
		\caption{The initial path of each agent. $Path_c$ starts from the formation centre and each agent's initial path is generated based on $Path_c$.
		}
		\label{astar}
	\end{figure}
	
	The second step of motion planning is based on gradient-gased optimization. According to the closed-form solution presented by \cite{richter2016polynomial}, we calculate a polynomial trajectory crossing all turning points of the path found by the first step following the principle of minimum acceleration. Then by sampling on the trajectory, each UAV's trajectory is parameterized as B-spline. Each UAV optimizes the trajectories of all UAVs simultaneously. For a formation with $N$ agents and B-spline curves with $ n$ control points, $ p$ order, the problem is formulated as
	\begin{equation} 
		\begin{aligned}
			&\mathop{\arg\min}\limits_{\boldsymbol  q^i_j, i\in[0,N-1],j\in[3,n-1]} \\&J=
			\lambda_f J_f+\lambda_{b} J_{b}+\lambda_s J_s+\lambda_d J_d+ \lambda_o J_o+\lambda_{r} J_{r} +\lambda_{e} J_{e} 
			\label{trajetory_cost_function}
		\end{aligned}
	\end{equation}
	where $\boldsymbol  q^i_j$  represents the $ j$th control point of agent $i$'s trajectory. $ \lambda_f$, $ \lambda_b$, $ \lambda_s$, $\lambda_d$, $\lambda_o$, $\lambda_{r}$ and $ \lambda_e$ determine the weight of each term.
	Because we specify the initial position, velocity, and acceleration of the trajectory, the $\boldsymbol q^i_0$, $\boldsymbol q^i_1$, and $\boldsymbol q^i_2$ are fixed.
	
	\begin{figure}[thpb]
		\centering
		\includegraphics[width=2.3in]{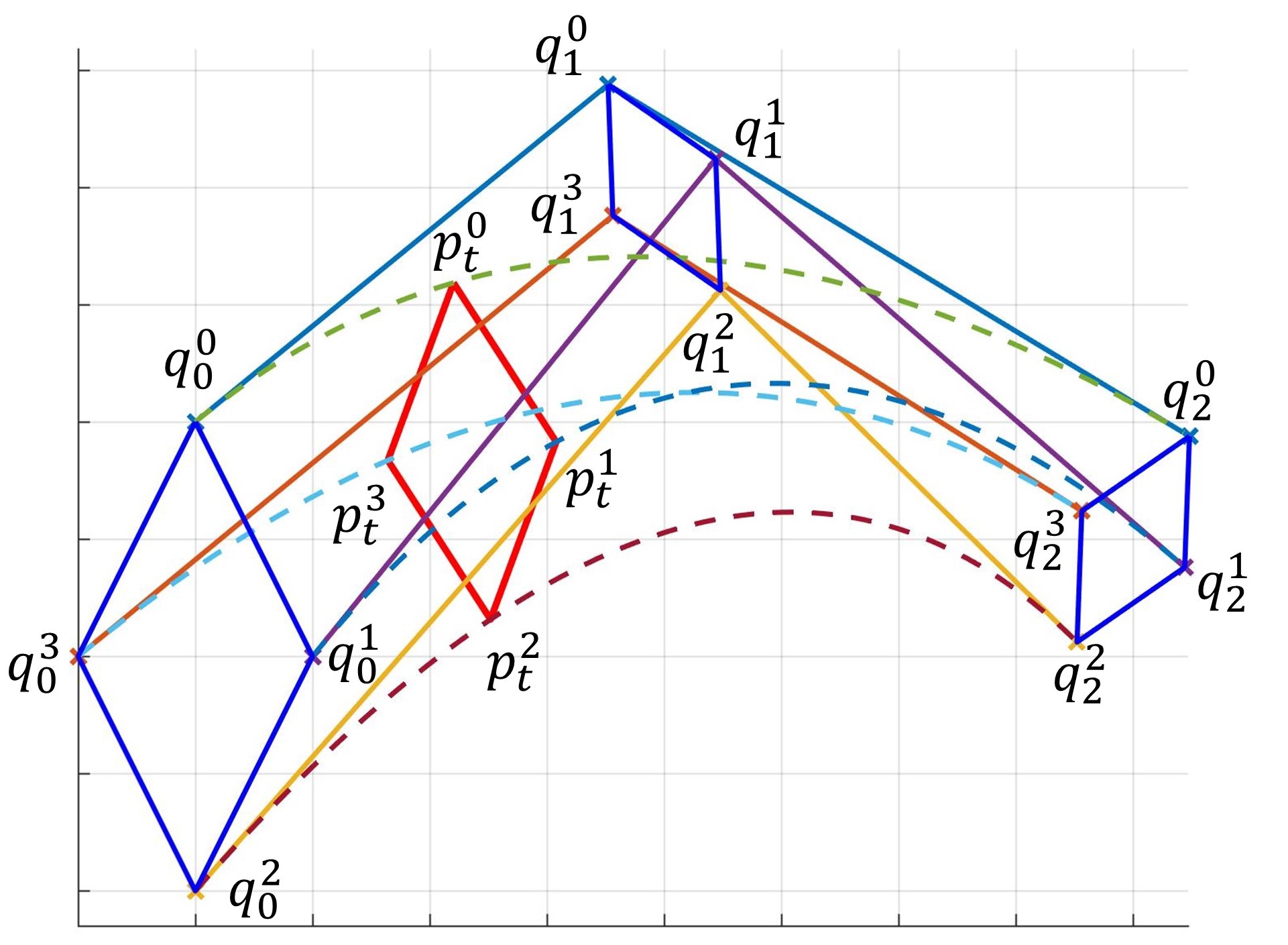}
		\caption{Illustration for Theorem 1. Suppose the reference formation is the diamond. $q^i_j$ is $j$th the control point of UAV $i$'s B-spline. The blue diamonds formed by $q^i_j$ all satisfy the affine transformation relative to the reference formation. According to Theorem 1, at any time $t$, the red diamond formed by UAV $i$'s position $p^i_t$ satisfies the affine transformation relative to the reference formation.
		}
		\label{matlab}
	\end{figure}
	
	To ensure safe flight first, we treat the formation requirement as a soft constraint, and thus, the first term in the cost function measures the degree of formation distortion. The formation constraint is converted to the control points by the following theorem for B-spline curves. Illustration of Theorem 1 is shown in Fig. \ref{matlab}.

	\newtheorem{theorem}{Theorem}
	
	\begin{theorem}
		Suppose the formation formed by control points of all B-splines satisfies the affine transformation compared to the reference formation, then at any time on the trajectory, the formation formed by the agents satisfies the affine transformation compared to the reference formation.
	\end{theorem}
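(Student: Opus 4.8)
\emph{Proof sketch (plan).} The plan is to reduce the statement to the one property of B-spline basis functions that is actually doing the work here: that they form a partition of unity. I would begin by writing agent $i$'s trajectory in the standard B-spline form $\boldsymbol{p}^i(t)=\sum_{j=0}^{n-1} N_{j,p}(t)\,\boldsymbol{q}^i_j$, where the scalar basis functions $N_{j,p}(\cdot)$ of order $p$ are common to all agents (the same knot vector and order are used throughout the swarm) and are therefore independent of the agent index $i$. The hypothesis that the formation formed by the control points satisfies the affine transformation relative to the reference formation $\boldsymbol{P}_{ref}$ means precisely that for every control-point index $j$ there exist a matrix $\boldsymbol{A}_j$ (in the simplified model of \eqref{affine_transformation}, $\boldsymbol{A}_j=s_j\boldsymbol{R}_j$) and a vector $\boldsymbol{b}_j$, both independent of $i$, with $\boldsymbol{q}^i_j=\boldsymbol{A}_j\boldsymbol{P}_{ref}^i+\boldsymbol{b}_j$ for all $i$.

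Next I would substitute this into the B-spline sum and pull the $i$-independent quantities outside, using only linearity:
\[
\boldsymbol{p}^i(t)=\sum_{j} N_{j,p}(t)\bigl(\boldsymbol{A}_j\boldsymbol{P}_{ref}^i+\boldsymbol{b}_j\bigr)=\Bigl(\sum_{j} N_{j,p}(t)\,\boldsymbol{A}_j\Bigr)\boldsymbol{P}_{ref}^i+\sum_{j} N_{j,p}(t)\,\boldsymbol{b}_j .
\]
Defining $\boldsymbol{A}(t):=\sum_{j} N_{j,p}(t)\,\boldsymbol{A}_j$ and $\boldsymbol{b}(t):=\sum_{j} N_{j,p}(t)\,\boldsymbol{b}_j$, both of which depend on $t$ but not on $i$, this reads $\boldsymbol{p}^i(t)=\boldsymbol{A}(t)\boldsymbol{P}_{ref}^i+\boldsymbol{b}(t)$; that is, the formation at time $t$ is the image of the reference formation under a single affine map, which is exactly the claim. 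The partition-of-unity identity $\sum_{j} N_{j,p}(t)=1$ is what guarantees that $(\boldsymbol{A}(t),\boldsymbol{b}(t))$ is a genuine affine combination of the per-control-point affine maps rather than an arbitrary linear combination; the local-support property of B-splines only reduces the sum at each $t$ to the $p+1$ active terms, which still sum to one, so nothing changes.

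The point I expect to require the most care — rather than being a genuine obstacle — is the distinction between the broad affine form $\boldsymbol{A}\,\boldsymbol{P}_{ref}+\boldsymbol{b}$ and the narrow ``scale $\times$ rotation'' form $s\boldsymbol{R}\,\boldsymbol{P}_{ref}+\boldsymbol{P}_{trans}$ used in the cost design: the computation above shows the broad form is preserved automatically, whereas preservation of the exact $s\boldsymbol{R}$ structure is a separate fact. It does hold in the planar case, since the $2\times 2$ scaled-rotation matrices form a ring isomorphic to $\mathbb{C}$ and are hence closed under the affine combination $\sum_{j} N_{j,p}(t)\,s_j\boldsymbol{R}_j$; in three dimensions this closure fails, so there one should read ``affine transformation'' in the broad sense. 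I would therefore state the theorem and carry out the proof in that broad sense, adding a one-line remark on the planar specialization, so that the remainder — B-spline definition, common basis functions, and factoring out the $i$-independent weights — stays a routine computation.
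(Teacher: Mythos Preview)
Your argument is correct and is essentially the paper's own proof: the paper writes the evaluation in matrix form $\boldsymbol{P}(t)=\boldsymbol{N}(t)\,\boldsymbol{Q}$, substitutes $\boldsymbol{q}_j=\boldsymbol{A}_j\boldsymbol{P}_{ref}+\boldsymbol{B}_j$, and reads off the single time-varying affine map $\boldsymbol{N}(t)[\boldsymbol{A}_0,\dots,\boldsymbol{A}_p]^{T}\boldsymbol{P}_{ref}+\boldsymbol{N}(t)[\boldsymbol{B}_0,\dots,\boldsymbol{B}_p]^{T}$, which is exactly your per-agent computation in stacked form. One small correction to your framing: the property actually doing the work is that the basis functions are shared across agents, not partition of unity---the identity $\sum_j N_{j,p}(t)=1$ is unnecessary for the broad-affine conclusion (and even for the planar $s\boldsymbol{R}$ closure, since those matrices form a real linear subspace), though your discussion of the broad-vs-narrow distinction is a worthwhile caveat the paper does not make.
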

	
	\begin{proof}
		At any time on the trajectory, the formation is determined by control points, $\boldsymbol P(t)=\boldsymbol N(t)\cdot \boldsymbol Q $, where $\boldsymbol P(t) = [\boldsymbol P_0(t),\boldsymbol P_1(t)...\boldsymbol P_{N-1}(t)]$ with $\boldsymbol{P}_i(t) $ denotes the position of agent $i $.
		$\boldsymbol N(t)$ is composed of the base function value which is only related to time $t$. $\boldsymbol{ Q}$ is a matrix of all the control points, $ \boldsymbol Q=[\boldsymbol q_0^T,\boldsymbol q_1^T...\boldsymbol q_p^T]^T $ and $\boldsymbol{q}_j=[\boldsymbol q^0_j, \boldsymbol q^1_j...\boldsymbol q^{N-1}_j]$. $\boldsymbol q_j$ satisfies a general affine transformation compared to the reference formation, $ \boldsymbol q_j=\boldsymbol A_j\cdot \boldsymbol P_{ref}+\boldsymbol B_j$. Then, $ \boldsymbol Q=[\boldsymbol A_0,\boldsymbol A_1...\boldsymbol A_p]^T\cdot\boldsymbol P_{ref}+[\boldsymbol B_0,\boldsymbol B_1...\boldsymbol B_p]^T$ and thus, $ \boldsymbol P(t)=\boldsymbol N(t)\cdot[\boldsymbol A_0,\boldsymbol A_1...\boldsymbol A_p]^T\cdot\boldsymbol P_{ref}+\boldsymbol N(t)\cdot[\boldsymbol B_0,\boldsymbol B_1...\boldsymbol B_p]^T $.
	\end{proof}
	
	Thanks to Theorem 1, we can convert the formation constraint on trajectory points to the control points. 
	According to \cite{zhao2018affine}, the in-plane affine transformation in Eq. \ref{affine_transformation} is determined by three agents, while in our work, the affine transformation is simplified to rotation, scaling and translation, and thus, it can be controlled by two agents, denoted as agent $0$ and agent $1$. In other word, other agents' desired position is a linear combination of agent $0$ and agent $1$ defined as $\boldsymbol{\hat q}^i_j =F (\boldsymbol{q}^0_j,\boldsymbol{q}^1_j)$, where $\boldsymbol{\hat q}^i_j$ is the desired position of the $j$th control point of agent $ i$ satisfying the formation structure. Therefore, the formation cost function is defined as
	\begin{equation}
		J_f=\sum_{i=2}^{N-1} \sum_{j=3}^{n-1}  \left\|\boldsymbol{q}^i_j-\boldsymbol{\hat q}^i_j\right\|_{2}^2
	\end{equation}

	In order to prevent a large degree of scaling factor, we introduce $J_b$ to limit the distance between agent $0$ and agent $1$,
	\begin{equation}
		J_b=\sum_{j=3}^{n-1}(\left\|\boldsymbol{q}^0_j-\boldsymbol{ q}^1_j\right\|_{2}^2-d_{ref}^2)^2
	\end{equation}
	where $d_{ref}$ is the distance between agent $0$ and agent $1$ in the reference formation.
	
	The $ J_s$ in Eq. \ref{trajetory_cost_function} represents the cost function of smoothness. The elastic band cost function \cite{zhou2019robust} is adopted to formulate it.
	\begin{equation}
		J_s=\sum_{i=0}^{N-1}\sum_{j=3}^{n-2} \left\| 2\boldsymbol{q}^i_j-\boldsymbol q^i_{j-1} -\boldsymbol q^i_{j+1}   \right\|_{2}^2
	\end{equation}
	
	Due to the dynamic feasibility, we limit the speed and acceleration at the control points. The cost function to penalize excessive speed and acceleration is defined as
	$ J_d=\sum_{i=0}^{N-1} (\sum_{j=2}^{n-2} g_v(\boldsymbol v_j^i)+ \sum_{j=1}^{n-2} g_a(\boldsymbol a_j^i) )  $ with $g_v(\boldsymbol v_j^i)$ described below and $g_a(\boldsymbol a_j^i)$ follows the same form.
	\begin{equation}
		g_v\left(\boldsymbol v_{j}^i\right)= \begin{cases}\left( ||\boldsymbol v_{j}^i||_2^2-v_{\max }^{2}\right)^{2} & ||\boldsymbol v_{j}^i||_2>v_{\max } \\ 0 & ||\boldsymbol v_{j}^i||_2\leq v_{\max }\end{cases}
	\end{equation}
	where $\boldsymbol{v_j^i}$ is the control point of one order derivative of the B-spline trajectory.	
	
	$J_o$ is used to keep trajectories away from obstacles. It can be quantified by the value in ESDF map. Due to the convex hull property of B-spline, we set the obstacle avoidance constraint on control points, $J_o= \sum_{i=0}^{N-1} \sum_{j=3}^{n-1} h(\boldsymbol q^i_j)$ with  $h(\boldsymbol q^i_j)$ defined as
	\begin{equation}
		h\left(\boldsymbol q^i_j\right)= \begin{cases}\left( E(\boldsymbol q^i_j)-D\right)^{2} & E(\boldsymbol q^i_j) <D \\ 0 & E(\boldsymbol q^i_j) \geq D\end{cases}
	\end{equation}
	where $E(\boldsymbol q^i_j) $ represents the ESDF value and $D$ is the safety distance for obstacle avoidance. The gradient of $E(\boldsymbol q^i_j)$ with respect to $\boldsymbol q^i_j $ is calculated by trilinear interpolation.
	
	$J_r$ is the reciprocal collision cost for formation flight. Theoretically, it is difficult to guarantee that two UAVs will not collide at any time when only the control points are constrained, and thus, we select a series of points on the trajectory at equal time intervals $[t_0,t_1...t_K]$ to apply constraints.
	\begin{equation}
		\begin{gathered}
			J_r=\sum_{i,j= 0,i<j}^{N-1} \sum_{k=0}^Kr(\boldsymbol p^i(t_k),\boldsymbol p^j(t_k))\\
			r(\boldsymbol p^i,\boldsymbol p^j)=\begin{cases}\left( ||\boldsymbol p^i-\boldsymbol p^j ||^2_2-D_r^2\right)^{2} & ||\boldsymbol p^i-\boldsymbol p^j ||_2<D_r\\ 0 & ||\boldsymbol p^i-\boldsymbol p^j||_2\geq D_r\end{cases}
		\end{gathered}
	\end{equation}
	where $\boldsymbol p^i(t_k)$ is the position of agent $i$ at time $t_k$ and $D_r$ is the safety distance for reciprocal avoidance. Besides, $\boldsymbol p^i(t_k)$ is a linear combination of control points, so the gradient with $J_r$ to $\boldsymbol q^i$ can be easily calculated. 
	
	The endpoint of the path found in the first step $ \boldsymbol { \hat p}_{end}^i $ is treated as a soft constraint. Due to the property of the Non-Uniform B-spline, the last control point coincides with the endpoint of the trajectory. Therefore, the cost function for endpoint constraint is defined as
	\begin{equation}
		J_e=\sum_{i=0}^{N-1} \left\|\boldsymbol q^i_{n-1}-\boldsymbol{\hat p}^i_{end}\right\|_2^2
	\end{equation}
	
	Each agent in the swam uses the L-BFGS algorithm from NLopt$\footnote{https://github.com/stevengj/nlopt}$ to optimize the control points of all trajectories. Although all UAVs have a shared map due to the GMM communication in Sev.IV, they can not strictly reach a consensus on trajectories because of the local minima in Eq. \ref{trajetory_cost_function}. Therefore, we introduce the STGO algorithm. After the L-BFGS algorithm converges, every UAV will broadcast its optimization result and the corresponding cost. The result with the lowest cost will be chosen as the final trajectories for all UAVs. In practice, the STGO algorithm effectively avoids local minima and ensures a strict consensus.
	
	\section{Simulation }
	
	We test our active sensing system and formation avoidance algorithm in simulation, and the environment is based on  ego-planner-swarm$\footnote{https://github.com/ZJU-FAST-Lab/ego-planner-swarm}$. The UAVs will perform GMM fitting on the point cloud information provided by the simulation environment and transmit it in the swarm. Similar to the real-world scenario, the camera's FOV is set to 60 degrees, and the maximum distance of the point cloud is 5m. Unlike other works \cite{zhou2021ego,quan2021distributed}, our reference formation tends to bring UAVs together for tasks such as cooperative transportation and relative localization. Typical formations include diamond, straight line, rectangle, and triangle.
	
	\subsection{Active Sensing System}
	
	The DPSO algorithm in our system takes 40 particles on each agent, and the maximum number of iterations is set as 20. The DPSO algorithm takes around 35$ms$, most of which is used to calculate perception value.
	
	We compare our active sensing system with the conventional velocity-direction-consistency
	yaw planning method. As shown in Fig. \ref{head}, for some scenarios like sharp turn, the velocity-direction-consistency yaw planning method can not detect obstacles in time, and this leads to the crash. We conduct 20 tests, and the success rate for the velocity-direction-consistency yaw planning method to cross the corner is 50\%, while with the help of our active sensing system, the success rate is 100\%. Besides, for formations like the straight line, the velocity-direction-consistency yaw planning method causes a large overlap of FOVs as shown in  Fig. \ref{head2}a. Our algorithm can take advantage of the UAV swarm by planning the direction of observation, maintaining full awareness of the environment, as demonstrated in Fig. \ref{head2}b. To be specific, the overlap rate of FOV in Fig. \ref{head2}a is 48.0\% while in Fig. \ref{head2}b, it is 14.7\%.
	\begin{figure}[thpb]
		\centering
		\includegraphics[width=3.4in]{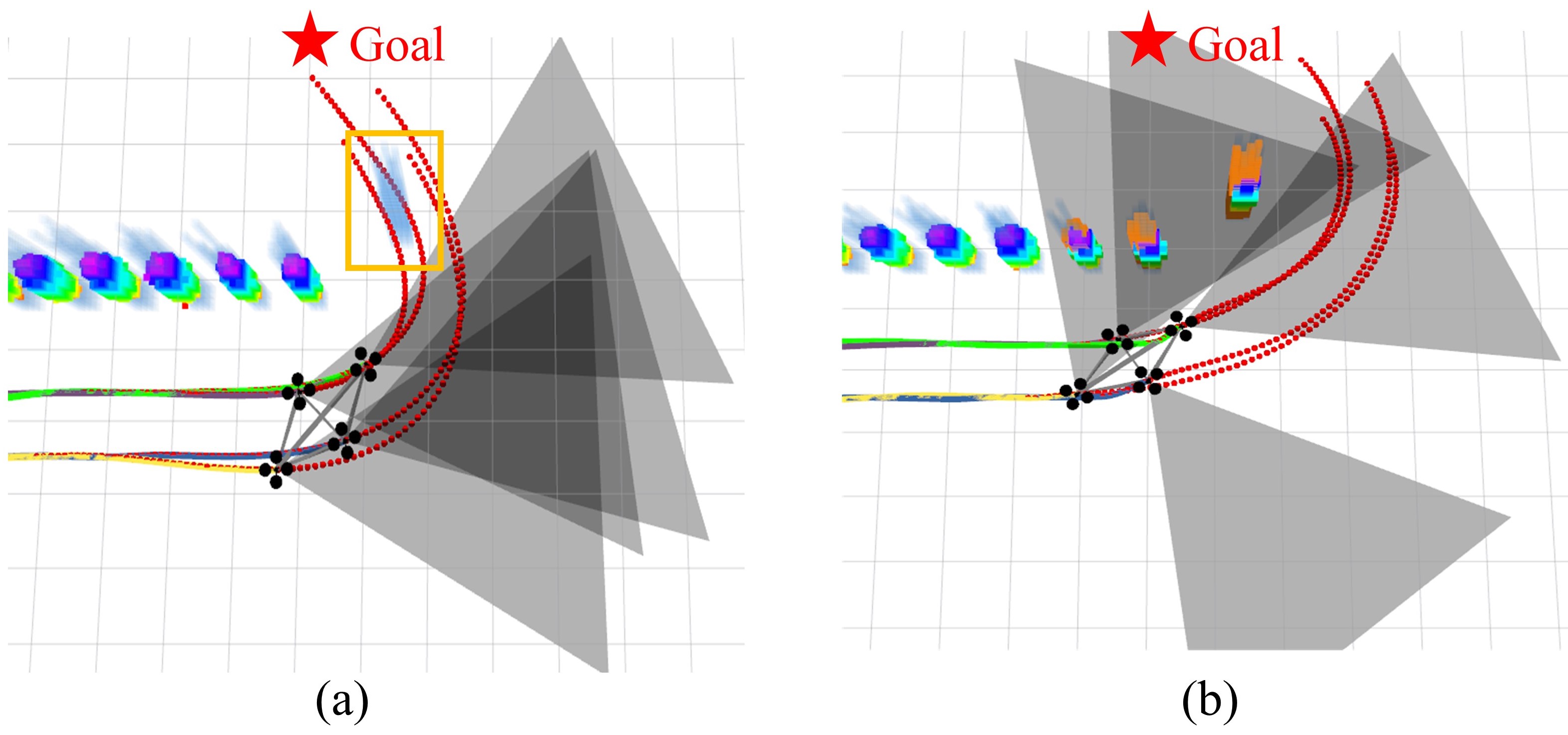}
		\caption{Comparison between two perception planning algorithms when facing sharp turn scenario. Red curves are the planned trajectories, and black transparent triangles are the UAVs' FOVs. Unknown obstacles are in blue. (a) The velocity-direction-consistency yaw planning algorithm cannot detect obstacles marked by the orange rectangle and lead to unsafe trajectories. The success rate to cross the corner is 50\%. (b) Our algorithm enables the UAV to actively observe unknown areas to detect obstacles in time and improve flight safety. The success rate to cross the corner is 100\%	}
		\label{head}
	\end{figure}
	\begin{figure}[thpb]
		\centering
		\includegraphics[width=3.3in]{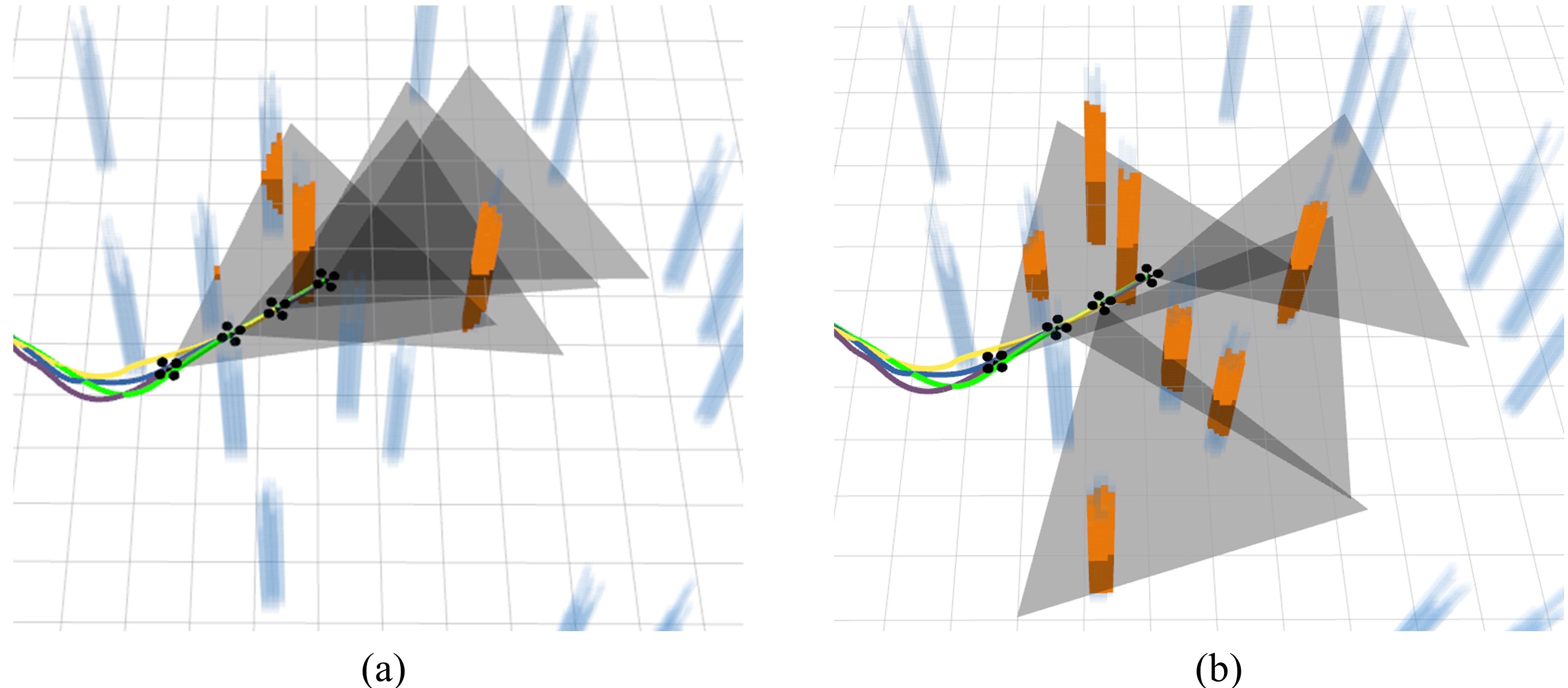}
		\caption{The sense range of two perception planning algorithms. (a) is the velocity-direction-consistency yaw planning algorithm and the overlap rate of FOV is 48.0\%. (b) is our proposed algorithm and the overlap rate of FOV is 14.7\%.	}
		\label{head2}
	\end{figure}
	
	\begin{table}[h]
		\renewcommand\arraystretch{1.5}
		\centering
		\caption{Success rate and $e_{dist}$ with and without GMM or STGO }
		\setlength{\tabcolsep}{4.5mm}
		\begin{tabular}{l|ccc}
			\hline
			Algorithm  & GMM+STGO & GMM & STGO \\
			\hline
			Success rate & 85\% & 10\% & 0\%  \\
			\hline
			$e_{dist}(m)$ & 0.3984 &1.6459 &  0.0725 \\
			\hline
		\end{tabular}
		\label{consensus}
	\end{table}
	
	\begin{table}[h]
		\renewcommand\arraystretch{1.5}
		\centering
		\caption{Success rate and $e_{dist}$ of different formation structures in the forest scenario}
		\begin{tabular}{l|cccc}
			\hline
			Formation structure & Diamond & Line & Rectangle & Triangle \\
			\hline
			Success rate & 85\% & 90\% & 80\% & 80\% \\
			\hline
			$e_{dist}$ & 0.3984  & 0.0234 & 1.0032 & 0.3920 \\
			\hline
		\end{tabular}
		\label{success_rate}
	\end{table}
	
	\subsection{Formation Motion Planning}	
	During the traversal task, all UAVs rely on synchronous timestamps for synchronous planning, which in our system comes from ROS. When the planning timer is triggered, each UAV broadcasts its current position, and then the algorithm described in Sec.V is conducted. We solve the following optimization from \cite{quan2021distributed} to describe the distortion degree of the current formation relative to the reference formation,
	\begin{equation}
		e_{dist}=\min_{\boldsymbol{R},s,\boldsymbol{t}} \sum_{i=0}^{N-1} \left\|\boldsymbol{p}_{i}^{ref}-\left(s \boldsymbol{R} \boldsymbol{p}_{i}^{c}+\boldsymbol{t}\right)\right\|^{2}
	\end{equation}
	where $\boldsymbol{p}_{i}^{ref} $	and $\boldsymbol{p}_{i}^{c}$ is the position of UAV $i$ in the reference and current formation respectively. A smaller $e_{dist}$ represents a smaller distortion.
	
	We demonstrate the success rate and $e_{dist}$ of our system when traversing a forest scenario maintaining a diamond formation with and without GMM communication or STGO in Table \ref{consensus}. As can be seen from the table, without STGO, the distortion degree is large, and there is much reciprocal collision, indicating a bad consensus between UAVs. Without GMM, UAVs can not maintain a full understanding of the environment and thus, evaluate the trajectories cost incorrectly, leading to a $0$\% success rate. Only under the joint help of GMM and STGO ensures a safe and strict consensus between UAVs.
	
	In addition, we use different reference formations to demonstrate our obstacle avoidance performance in an unknown environment. The average speed is about $1.8m/s$, and the motion planning algorithm costs around $30ms$. Fig. \ref{kuang_jia}a illustrates the diamond, straight line, rectangle, and triangle formation in the forest scenario. As the figure shows, all UAVs avoid the obstacles on the same side, and the connectivity of the swarm is maintained during the flight. When encountering narrow corridors, the formation will shrink and temporarily break for safety.
	The success rate and $e_{dist}$ of different formation structures in the forest scenario for 20 tests are shown in Table \ref{success_rate}.

	\section{CONCLUSIONS}
	
	This paper proposes a novel formation perception and motion planning framework. The simulation shows that the GMM and STGO fused method can guarantee a safe and strict consensus in a traversal task with a success rate of more than 80\%. At the same time, GMM can substantially reduce the communication bandwidth from 200kbps to 5kbps. Besides, an active sensing system is embedded in this framework to improve the success rate from 50\% to 100\% in the sharp turn scenario. In the future, we plan to test this framework in the real-world experiment and develop it for a cooperative transportation task.

	\addtolength{\textheight}{-12cm}   
	\bibliographystyle{IEEEtran}
	\balance
	\footnotesize
	\bibliography{IEEEexample}

\begin{thebibliography}{10}
\providecommand{\url}[1]{#1}
\csname url@samestyle\endcsname
\providecommand{\newblock}{\relax}
\providecommand{\bibinfo}[2]{#2}
\providecommand{\BIBentrySTDinterwordspacing}{\spaceskip=0pt\relax}
\providecommand{\BIBentryALTinterwordstretchfactor}{4}
\providecommand{\BIBentryALTinterwordspacing}{\spaceskip=\fontdimen2\font plus
\BIBentryALTinterwordstretchfactor\fontdimen3\font minus
  \fontdimen4\font\relax}
\providecommand{\BIBforeignlanguage}[2]{{%
\expandafter\ifx\csname l@#1\endcsname\relax
\typeout{** WARNING: IEEEtran.bst: No hyphenation pattern has been}%
\typeout{** loaded for the language `#1'. Using the pattern for}%
\typeout{** the default language instead.}%
\else
\language=\csname l@#1\endcsname
\fi
#2}}
\providecommand{\BIBdecl}{\relax}
\BIBdecl

\bibitem{chung2018survey}
S.-J. Chung, A.~A. Paranjape, P.~Dames, S.~Shen, and V.~Kumar, ``A survey on
  aerial swarm robotics,'' \emph{IEEE Transactions on Robotics}, vol.~34,
  no.~4, pp. 837--855, 2018.

\bibitem{nestmeyer2017decentralized}
T.~Nestmeyer, P.~Robuffo~Giordano, H.~H. B{\"u}lthoff, and A.~Franchi,
  ``Decentralized simultaneous multi-target exploration using a connected
  network of multiple robots,'' \emph{Autonomous robots}, vol.~41, no.~4, pp.
  989--1011, 2017.

\bibitem{alonso2019distributed}
J.~Alonso-Mora, E.~Montijano, T.~N{\"a}geli, O.~Hilliges, M.~Schwager, and
  D.~Rus, ``Distributed multi-robot formation control in dynamic
  environments,'' \emph{Autonomous Robots}, vol.~43, no.~5, pp. 1079--1100,
  2019.

\bibitem{zhou2021raptor}
B.~Zhou, J.~Pan, F.~Gao, and S.~Shen, ``Raptor: Robust and perception-aware
  trajectory replanning for quadrotor fast flight,'' \emph{IEEE Transactions on
  Robotics}, vol.~37, no.~6, pp. 1992--2009, 2021.

\bibitem{tordesillas2021panther}
J.~Tordesillas and J.~P. How, ``Panther: perception-aware trajectory planner in
  dynamic environments,'' \emph{arXiv preprint arXiv:2103.06372}, 2021.

\bibitem{chen2021active}
G.~Chen, W.~Dong, X.~Sheng, X.~Zhu, and H.~Ding, ``An active sense and avoid
  system for flying robots in dynamic environments,'' \emph{IEEE/ASME
  Transactions on Mechatronics}, vol.~26, no.~2, pp. 668--678, 2021.

\bibitem{zhou2019robust}
B.~Zhou, F.~Gao, L.~Wang, C.~Liu, and S.~Shen, ``Robust and efficient quadrotor
  trajectory generation for fast autonomous flight,'' \emph{IEEE Robotics and
  Automation Letters}, vol.~4, no.~4, pp. 3529--3536, 2019.

\bibitem{panagou2014cooperative}
D.~Panagou and V.~Kumar, ``Cooperative visibility maintenance for
  leader--follower formations in obstacle environments,'' \emph{IEEE
  Transactions on Robotics}, vol.~30, no.~4, pp. 831--844, 2014.

\bibitem{paul2008modelling}
T.~Paul, T.~R. Krogstad, and J.~T. Gravdahl, ``Modelling of uav formation
  flight using 3d potential field,'' \emph{Simulation Modelling Practice and
  Theory}, vol.~16, no.~9, pp. 1453--1462, 2008.

\bibitem{quan2021distributed}
L.~Quan, L.~Yin, C.~Xu, and F.~Gao, ``Distributed swarm trajectory optimization
  for formation flight in dense environments,'' \emph{arXiv preprint
  arXiv:2109.07682}, 2021.

\bibitem{quan2020survey}
L.~Quan, L.~Han, B.~Zhou, S.~Shen, and F.~Gao, ``Survey of uav motion
  planning,'' \emph{IET Cyber-systems and Robotics}, vol.~2, no.~1, pp. 14--21,
  2020.

\bibitem{xu2021omni}
H.~Xu, Y.~Zhang, B.~Zhou, L.~Wang, X.~Yao, G.~Meng, and S.~Shen, ``Omni-swarm:
  A decentralized omnidirectional visual-inertial-uwb state estimation system
  for aerial swarm,'' \emph{arXiv preprint arXiv:2103.04131}, 2021.

\bibitem{schilling2021vision}
F.~Schilling, F.~Schiano, and D.~Floreano, ``Vision-based drone flocking in
  outdoor environments,'' \emph{IEEE Robotics and Automation Letters}, vol.~6,
  no.~2, pp. 2954--2961, 2021.

\bibitem{tallamraju2019active}
R.~Tallamraju, E.~Price, R.~Ludwig, K.~Karlapalem, H.~H. B{\"u}lthoff, M.~J.
  Black, and A.~Ahmad, ``Active perception based formation control for multiple
  aerial vehicles,'' \emph{IEEE Robotics and Automation Letters}, vol.~4,
  no.~4, pp. 4491--4498, 2019.

\bibitem{zhang2021agile}
P.~Zhang, G.~Chen, and W.~Dong, ``Agile formation control of drone flocking
  enhanced with active vision-based relative localization,'' \emph{arXiv
  preprint arXiv:2108.05505}, 2021.

\bibitem{corah2019communication}
M.~Corah, C.~O’Meadhra, K.~Goel, and N.~Michael, ``Communication-efficient
  planning and mapping for multi-robot exploration in large environments,''
  \emph{IEEE Robotics and Automation Letters}, vol.~4, no.~2, pp. 1715--1721,
  2019.

\bibitem{bishop2006pattern}
C.~M. Bishop and N.~M. Nasrabadi, \emph{Pattern recognition and machine
  learning}.\hskip 1em plus 0.5em minus 0.4em\relax Springer, 2006, vol.~4,
  no.~4.

\bibitem{books/daglib/0014221}
S.~Thrun, W.~Burgard, and D.~Fox, \emph{Probabilistic robotics.}, ser.
  Intelligent robotics and autonomous agents.\hskip 1em plus 0.5em minus
  0.4em\relax MIT Press, 2005.

\bibitem{usenko2017real}
V.~Usenko, L.~Von~Stumberg, A.~Pangercic, and D.~Cremers, ``Real-time
  trajectory replanning for mavs using uniform b-splines and a 3d circular
  buffer,'' in \emph{2017 IEEE/RSJ International Conference on Intelligent
  Robots and Systems (IROS)}.\hskip 1em plus 0.5em minus 0.4em\relax IEEE,
  2017, pp. 215--222.

\bibitem{xu2015augmented}
J.~Xu, S.~Zhu, Y.~C. Soh, and L.~Xie, ``Augmented distributed gradient methods
  for multi-agent optimization under uncoordinated constant stepsizes,'' in
  \emph{2015 54th IEEE Conference on Decision and Control (CDC)}.\hskip 1em
  plus 0.5em minus 0.4em\relax IEEE, 2015, pp. 2055--2060.

\bibitem{richter2016polynomial}
C.~Richter, A.~Bry, and N.~Roy, ``Polynomial trajectory planning for aggressive
  quadrotor flight in dense indoor environments,'' in \emph{Robotics
  research}.\hskip 1em plus 0.5em minus 0.4em\relax Springer, 2016, pp.
  649--666.

\bibitem{zhao2018affine}
S.~Zhao, ``Affine formation maneuver control of multiagent systems,''
  \emph{IEEE Transactions on Automatic Control}, vol.~63, no.~12, pp.
  4140--4155, 2018.

\bibitem{zhou2021ego}
X.~Zhou, J.~Zhu, H.~Zhou, C.~Xu, and F.~Gao, ``Ego-swarm: A fully autonomous
  and decentralized quadrotor swarm system in cluttered environments,'' in
  \emph{2021 IEEE International Conference on Robotics and Automation
  (ICRA)}.\hskip 1em plus 0.5em minus 0.4em\relax IEEE, 2021, pp. 4101--4107.

\end{thebibliography}





\end{document}